\titleformat{\section}{\mdseries\Large}{\S\thesection}{1em}{}[\vspace{-1.6em}\rule{\titlewidth}{.25pt}]
\titleformat{\subsection}{\mdseries\scshape\large}{\S\thesubsection}{1em}{}
\titleformat{\subsubsection}{\bfseries}{\S\thesubsubsection}{1em}{}
\def \lb {{\langle}}
\def \rb {{\rangle}}
\def \R {{\mathbb R}}
\def \C {{\mathbb C}}
\def \E {{\mathbb E}}
\def \Ld {{\bf L^2}}
\def \V {{\bf V}}
\def \Diff {{\rm {Diff}}}
\def \sign {{\rm sign}}
\def \hh {{h}}
\def\cistar{\kern.2em\mbox{$\odot\kern-.67em\star\kern .4em$}}
\newtheorem*{rep@theorem}{\rep@title}
\newcommand{\newreptheorem}[2]{%
\newenvironment{rep#1}[1]{%
 \def\rep@title{#2 \ref{##1}}%
 \begin{rep@theorem}}%
 {\end{rep@theorem}}}
\theoremstyle{theorem}
\newtheorem{proposition}{Proposition}
\theoremstyle{definition}
\theoremstyle{remark}
\title{Understanding Deep Convolutional Networks}
\author{St\'{e}phane Mallat\\\'{E}cole Normale Sup\'{e}rieure, CNRS, PSL\\ 
45 rue d'Ulm, 75005
  Paris, France}
\date{To appear in Philosophical Transactions A in 2016}
\begin{document}

\maketitle

\begin{abstract}
Deep convolutional networks provide state of the art
classifications and regressions results
over many high-dimensional problems.
We review their architecture, which scatters data with 
a cascade of linear filter weights and non-linearities. 
A mathematical framework is introduced to analyze 
their properties.
Computations of invariants involve 
multiscale contractions, the linearization of hierarchical symmetries,
and sparse separations.
Applications are discussed. \\
\end{abstract}

\section{Introduction}

Supervised learning is a high-dimensional interpolation problem.
We approximate a function
$f(x)$ from $q$ training samples $\{x^i, f(x^i)\}_{i \leq q}$, 
where $x$ is a data vector of very high dimension $d$.
This dimension is often
larger than $10^6$, for images or other large size signals. 
Deep convolutional neural networks 
have recently obtained remarkable experimental results \cite{nature}.  
They give state of the art performances for image
classification with thousands of complex classes \cite{Krizhevsky},
speech recognition \cite{speech}, bio-medical applications \cite{Leung},
natural language understanding \cite{language}, and in many other domains. 
They are also
studied as neuro-physiological models of vision \cite{poggio}.

Multilayer neural networks are computational learning architectures
which propagate the input data across a sequence of linear operators
and simple non-linearities.
The properties of shallow networks, with one hidden layer, are well understood
as decompositions in
families of ridge functions \cite{ridglet}. However, these approaches do not 
extend to networks with more layers.
Deep convolutional neural networks, introduced by Le Cun \cite{LeCun}, 
are implemented with linear convolutions followed by non-linearities,
over typically more than $5$ layers.
These complex programmable machines, 
defined by potentially billions of filter weights, 
bring us to a different mathematical world. 

Many researchers have pointed out that deep convolution
networks are computing progressively more powerful invariants as 
depth increases \cite{poggio,nature}, but relations with networks weights
and non-linearities are complex. This paper aims at clarifying important
principles which govern the properties 
of such networks, but their
architecture and weights may differ with applications. 
We show that computations of invariants involve 
multiscale contractions, the linearization of hierarchical symmetries,
and sparse separations. This conceptual basis 
is only a first step towards a full mathematical understanding of 
convolutional network properties.

In high dimension, $x$ has a considerable number of parameters, which
is a dimensionality curse.
Sampling uniformly a volume of dimension $d$ requires a number of samples
which grows exponentially with $d$.
In most applications, the number $q$ of training samples rather grows linearly
with $d$. It is possible to approximate $f(x)$ with so few samples,
only if $f$ has some strong regularity properties allowing to ultimately
reduce the dimension of the estimation. 
Any learning algorithm, including deep convolutional networks,
thus relies on an underlying assumption of regularity. Specifying the 
nature of this regularity is one of the core mathematical problem.

One can try to circumvent the curse of dimensionality
by reducing the variability or the dimension of $x$, without sacrificing the ability to approximate 
$f(x)$. This is done by defining a new variable $\Phi(x)$ where
$\Phi$ is a {\it contractive} operator which reduces 
the range of variations of $x$, while still {\it separating} different values
of $f$: $\Phi(x) \neq \Phi(x')$ if $f(x) \neq f(x')$. This
separation-contraction trade-off needs to be adjusted to the properties of $f$.

Linearization is a strategy used in machine learning to 
reduce the dimension with a linear
projector. A low-dimensional linear projection of $x$ can separate the values of
$f$ if this function remains constant in the direction of 
a high-dimensional linear space. 
This is rarely the case, but one can try to find $\Phi(x)$ which
linearizes high-dimensional domains where $f(x)$ remains constant.
The dimension is then reduced by applying a low-dimensional
linear projector on $\Phi(x)$. Finding such a $\Phi$ is the
dream of kernel learning algorithms, explained in Section \ref{kernel}.

Deep neural networks are more conservative. 
They progressively 
contract the space and linearize transformations along which
$f$ remains nearly constant, to preserve separation.
Such directions are defined by linear operators which belong to
groups of local symmetries, introduced in Section \ref{symmdiffsec}.
To understand the difficulty to linearize the action of high-dimensional 
groups of operators, we begin with the groups of translations and 
diffeomorphisms, which deform signals. 
They capture essential mathematical properties that are extended
to general deep network symmetries, in Section \ref{grnsdfssec}.

To linearize diffeomorphisms and 
preserve separability, Section \ref{wavelet-sec} shows that we must
separate the variations of $x$ at different scales, 
with a wavelet transform. This is implemented with
multiscale filter convolutions, which are building blocks
of deep convolution filtering. General deep network architectures
are introduced in Section \ref{DeepNet}. They iterate
on linear operators which filter and
linearly combine different channels in each network layer, 
followed by contractive non-linearities.

To understand how non-linear contractions interact with linear operators,
Section \ref{scatnsfoisdnf} begins with simpler networks which
do not recombine channels in each layer.
It defines a non-linear scattering transform, introduced in \cite{mallat-math},
where wavelets have a separation and linearization role.
The resulting 
contraction, linearization and separability properties are
reviewed. We shall see that sparsity is important for separation.

Section \ref{grnsdfssec} extends these ideas to a more general class of
deep convolutional networks.
Channel combinations provide the flexibility needed to extend
translations to larger groups of local symmetries adapted
to $f$. The network is structured by
factorizing groups of symmetries, in which case all linear operators
are generalized convolutions.
Computations are ultimately performed with filter weights, which are
learned. Their relation with groups of symmetries is
explained. A major issue is to preserve a separation margin
across classification frontiers. Deep convolutional networks have the ability
to do so, by separating network fibers 
which are progressively more invariant and specialized. 
This can give rise to invariant grandmother type neurons observed in deep networks
\cite{grandmother}.
The paper studies architectures as opposed to 
computational learning of network weights,
which is an outstanding optimization issue \cite{nature}.

{\bf Notations} $\|z\|$ is a Euclidean norm if $z$ is a vector in a Euclidean
space. If $z$ is a function in $\Ld$ then $\|z\|^2 = \int |z(u)|^2 du$.
If $z = \{z_k \}_k$
is a sequence of vectors or functions then $\|z\|^2 = \sum_k \|z_k \|^2$. 

\section{Linearization, Projection  and Separability}
\label{kernel}

Supervised learning computes an
approximation $\tilde f(x)$ of a function $f(x)$ 
from  $q$ training samples $\{x^i ,  f(x^i) \}_{i\leq q}$,  
for $x = (x(1),...,x(d))\in \Omega$.
The domain $\Omega$ is a high dimensional open 
subset of $\R^d$, not a low-dimensional manifold.
In a regression problem, $f(x)$ takes its values in $\R$, whereas
in classification its values are class indices.

\paragraph{Separation} Ideally, we would like to reduce the dimension of $x$ by computing
a low dimensional vector $\Phi(x)$ such that 
one can write $f(x) = f_0 (\Phi(x))$. It is equivalent to 
impose that if $f(x) \neq f(x')$ then
$\Phi(x) \neq \Phi(x')$. We then say that $\Phi$ {\it separates} $f$.
For regression problems, to guarantee that $f_0$ is regular, we further
impose that the separation is Lipschitz:
\begin{equation}
\label{minsdf01}
\exists \epsilon>0~~ \forall (x,x') \in \Omega^2 ~~,~~
\|\Phi(x) - \Phi(x') \| \geq \epsilon\, |f(x) - f(x')|~.
\end{equation}
It implies that $f_0$ is Lipschitz continuous: $|f_0(z)-f_0(z')| \leq 
\epsilon^{-1} |z - z'|$, for $(z,z') \in \Phi(\Omega)^2$. 
In a classification problem, $f(x) \neq f(x')$ means that $x$ and $x '$ are
not in the same class. The Lipschitz separation condition (\ref{minsdf01}) 
becomes a margin condition specifying a minimum distance across classes:
\begin{equation}
\label{minsdf010}
\exists \epsilon>0~~ \forall (x,x') \in \Omega^2 ~~,~~
\|\Phi(x) - \Phi(x') \| \geq \epsilon\, ~~\mbox{if}~~f(x) \neq f(x')~.
\end{equation}
We can try to find a linear
projection of $x$ in some space $\V$ of lower dimension $k$,
which separates $f$. It requires that
$f(x) = f(x + z)$ for all $z \in \V^\perp$, where $\V^\perp$ is the orthogonal
complement of $\V$ in $\R^d$, of dimension $d-k$.
In most cases, the final dimension $k$ can not be much smaller than $d$.

\paragraph{Linearization} An alternative strategy is to
linearize the variations of $f$ with a first change of variable
$\Phi(x) = \{ \phi_k (x) \}_{k \leq d'}$ of dimension $d'$ potentially
much larger than the dimension $d$ of $x$. We can then optimize
a low-dimensional linear projection along directions where $f$ is constant.
We say that $\Phi$ {\it separates $f$ linearly} if $f(x)$ is well approximated
by a one-dimensional projection:
\begin{equation}
\label{minsdf0}
\tilde f(x) = \lb \Phi(x) \, , w \rb = \sum_{k=1}^{d'} w_k \, \phi_k (x) ~.
\end{equation}
The regression vector $w$ is optimized by
minimizing a loss on the training data, which needs to be
regularized if $d' > q$, for example
by an $\bf l^p$ norm of $w$ with a regularization constant $\lambda$:
\begin{equation}
\label{minsdf}
\sum_{i=1}^q {\rm loss}(f(x^i) - \tilde f(x^i)) + \lambda\, \sum_{k = 1}^{d'}
|w_k|^p~.
\end{equation}
Sparse regressions are obtained with $p \leq 1$, whereas $p=2$ defines
kernel regressions \cite{Friedman}. 

Classification problems are addressed similarly, by approximating the
frontiers between classes. For example, a classification with $Q$ classes
can be reduced to $Q-1$ ``one versus all'' binary classifications.
Each binary classification is specified by an
$f(x)$ equal to $1$ or $-1$ in each class. 
We approximate $f(x)$ by 
$\tilde f(x) = \sign(\lb \Phi(x) , w \rb)$, where $w$ minimizes the
training error (\ref{minsdf}).

\section{Invariants, Symmetries and Diffeomorphisms}
\label{symmdiffsec}

We now study strategies to compute a change of variables $\Phi$ which linearizes
$f$. Deep convolutional networks operate layer per layer
and linearize $f$ progressively, as depth increases.
Classification and regression problems are addressed similarly by considering
the level sets of $f$, defined by
$\Omega_t = \{x ~:~f(x) = t \}$ if $f$ is continuous.
For classification, each level set is a particular class.
Linear separability means that one can find $w$ such
that $f(x) \approx \lb \Phi(x) , w \rb$.
If $x \in \Omega_t$ then $\lb \Phi(x) , w \rb \approx t$, so all $\Omega_t$
are mapped by $\Phi$ in different hyperplanes
orthogonal to some $w$. The change of variable linearizes the level sets of $f$.

\paragraph{Symmetries} To linearize level sets, we need to find directions along which $f(x)$
does not vary locally, and then linearize these directions in order to map
them in a linear space. It is tempting to try to do this with some
local data analysis along $x$.
This is not possible because the training set includes few 
close neighbors in high dimension. 
We thus consider simultaneously
all points $x \in \Omega$ and look for 
common directions along which $f(x)$ does not vary.
This is where groups of symmetries come in. 
Translations and diffeomorphisms will illustrate the
difficulty to linearize high dimensional symmetries, 
and provide a first mathematical ground
to analyze convolution networks architectures. 

We look for invertible operators 
which preserve the value of $f$.
The action of an operator $g$ on $x$ is written $g.x$. A global
symmetry is an invertible and often non-linear 
operator $g$ from $\Omega$ to $\Omega$,
such that $f(g.x) = f(x)$ for all $x \in \Omega$.
If $g_1$ and $g_2$ are global symmetries then $g_1.g_2$ is also
a global symmetry, so products define groups of symmetries. 
Global symmetries are usually 
hard to find. We shall first concentrate on local symmetries.
We suppose that there is a
metric $|g|_G$ which measures the distance between $g \in G$ and the identity.
A function $f$ is locally invariant to the action of $G$ if
\begin{equation}
\label{localsym}
\forall x \in \Omega~~,~~\exists C_x > 0~~,~~\forall g \in G~~\mbox{with}~~
|g|_G < C_x~~,~~f(g.x) = f(x)~.
\end{equation}
We then say that $G$ is a 
group of local symmetries of $f$. The constant $C_x$ is the local
range of symmetries which preserve $f$. 
Since $\Omega$ is a continuous subset of $\R^d$, we consider groups of operators
which transport vectors in $\Omega$ with a continuous parameter. 
They are called Lie groups if the group 
has a differential structure.

\paragraph{Translations and diffeomorphisms}
Let us interpolate the $d$ samples of $x$ and define
$x(u)$ for all  $u \in \R^n$, with $n=1,2,3$ respectively for time-series, 
images and volumetric data. The translation group $G = \R^n$ is an example of Lie group.
The action of  $g \in G = \R^n$ over $x \in \Omega$ is
$g.x(u) = x(u-g)$. The distance $|g|_G$ between $g$ and the identity
is the Euclidean norm of $g \in \R^n$. The function
$f$ is locally invariant to translations if sufficiently small
translations of $x$ do not change $f(x)$.
Deep convolutional networks compute
convolutions, because they assume that translations are local symmetries of $f$.
The dimension of a group $G$ is the number of
generators which define all group elements by products. 
For $G = \R^n$ it is equal to $n$.

Translations are not powerful symmetries because
they are defined by only $n$ variables, and $n=2$ for images.
Many image classification problems are also locally invariant to small deformations,
which provide much stronger constraints.
It means that $f$ is locally
invariant to diffeomorphisms $G = \Diff(\R^n)$, which transform $x(u)$ with a
differential warping of $u \in \R^n$. We do not know in advance what is the
local range of diffeomorphism symmetries.
For example, to classify images $x$ of hand-written digits, 
certain deformations of $x$ will
preserve a digit class but modify the class of another digit.
We shall linearize small diffeomorphims $g$. 
In a space where local symmetries are linearized, we can
find global symmetries by
optimizing linear projectors
which preserve the values of $f(x)$, and thus reduce dimensionality.

Local symmetries are linearized by finding a change of variable $\Phi(x)$
which locally linearizes the action of $g \in G$.
We say that $\Phi$ is Lipschitz continuous if
\begin{equation}
\label{addLipsch3}
\exists C > 0~,~\forall (x,g) \in \Omega \times G~~,~~
\|\Phi(g.x) - \Phi(x) \| \leq C\, |g|_G\, \|x\|~.
\end{equation}
The norm $\|x\|$ is just a normalization factor often set to $1$.
The Radon-Nikodim property proves that the map that transforms $g$ into
$\Phi(g.x)$  is almost everywhere differentiable in the sense of Gateaux.
If $|g|_G$ is small then  $\Phi(x) - \Phi(g.x)$
is closely approximated by a bounded linear operator of $g$,
which is the G\^ateaux derivative. Locally, it thus nearly remains in 
a linear space.

Lipschitz continuity over diffeomorphisms is defined relatively to a metric,
which is now defined. A small diffeomorphism acting on $x(u)$ 
can be written as a translation of $u$ by a $g(u)$:
\begin{equation}
\label{diffaction}
g.x (u) = x(u - g(u))~~ \mbox{with}~~g \in {\bf C^1} (\R^n)~.
\end{equation}
This diffeomorphism translates points by at most $\|g\|_\infty = \sup_{u \in \R^n} |g(u)|$. 
Let $|\nabla g(u)|$ be the matrix  norm of the Jacobian matrix of $g$ at $u$.
Small diffeomorphisms correspond to 
$\|\nabla g \|_\infty = \sup_u |\nabla g(u)| < 1$. 
Applying a diffeomorphism $g$ transforms two points $(u_1,u_2)$ into
$(u_1-g(u_1),u_2-g(u_2))$. Their distance is thus multiplied by a
scale factor, which is bounded above and below by
$1 \pm \|\nabla g \|_\infty$.
The distance of this diffeomorphism to the identity is defined by:
\begin{equation}
\label{diffmetrs}
|g|_{\Diff} = 2^{-J}\, \|g\|_\infty + \|\nabla g \|_\infty~.
\end{equation}
The factor $2^J$ is a local translation invariance scale. It gives
the range of translations over which small diffeomorphisms are
linearized. For $J = \infty$ the metric is globally invariant
to translations.

\section{Contractions and Scale Separation with Wavelets}
\label{wavelet-sec}

Deep convolutional networks can linearize the action of very complex non-linear
transformations in high dimensions, such as inserting glasses in images of 
faces \cite{faces}. A transformation of $x \in \Omega$ 
is a transport of $x$ in $\Omega$. 
To understand how to linearize any such transport, we shall begin
with translations and diffeomorphisms.
Deep network architectures are covariant to translations, because all
linear operators are implemented with convolutions. 
To compute invariants to translations and linearize diffeomorphisms,
we need to separate scales and apply a non-linearity. 
This is implemented with a cascade of filters computing
a wavelet transform, and a pointwise contractive non-linearity. 
Section \ref{grnsdfssec} extends these tools to general group actions.

\paragraph{Averaging} A linear operator
can compute local invariants to the action of the translation group $G$,
by averaging $x$ along the orbit $\{g.x\}_{g \in G}$,
which are translations of $x$. This is done with a convolution by an
averaging kernel $\phi_J (u) = 2^{-nJ} \phi(2^{-J} u)$ 
of size $2^J$, with $\int \phi(u)\,du = 1$:
\begin{equation}
\label{andv8sdfsdfs9}
\Phi_J x(u) = x \star \phi_J (u) ~.
\end{equation}
One can verify \cite{mallat-math} that this averaging is 
Lipschitz continuous to diffeomorphisms for all $x \in \Ld(\R^n)$,
over a translation range $2^J$.
However, it eliminates the variations of $x$ above the frequency $2^{-J}$.
If  $J = \infty$ then $\Phi_\infty x = \int x(u)\, du$,
which eliminates nearly all information.

\begin{figure*}
\center
\includegraphics[width=12cm]{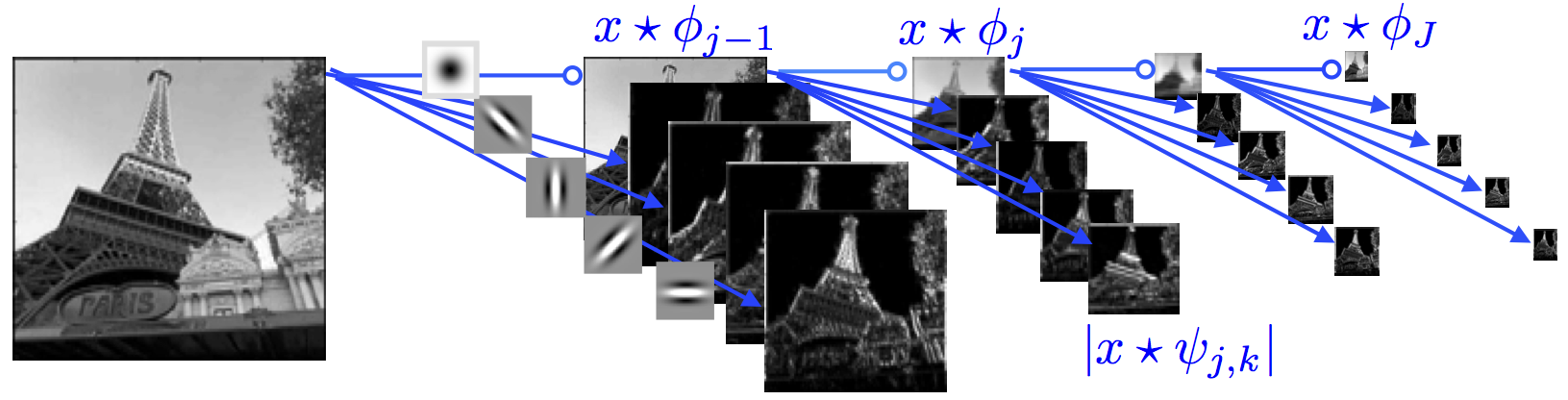}
\caption{Wavelet transform of
an image $x(u)$, computed with a cascade of convolutions 
with filters over $J = 4$ scales and $K=4$
orientations. The low-pass and $K=4$ band-pass filters are shown on the
first arrows.}
\label{figure2}
\end{figure*}

\paragraph{Wavelet transform} A diffeomorphism acts as a local translation and scaling of the variable
$u$. If we let aside translations for now, to linearize 
small diffeomorphism we must linearize this scaling action.
This is done by separating the variations of $x$ at different scales 
with wavelets. We define $K$ wavelets $\psi_k (u)$ for $u \in \R^n$.
They are regular functions with a fast decay and
a zero average $\int \psi_k (u)\, du = 0$. 
These $K$ wavelets are dilated by $2^j$:
$\psi_{j,k} (u) = 2^{-j n} \psi_k (2^{-j} u)$. 
A wavelet transform computes the local average of 
$x$ at a scale $2^J$, and variations 
at scales $2^j \geq 2^J$ with wavelet convolutions:
\begin{equation}
\label{wave1}
{\cal W} x = \{ x \star \phi_J(u) \, , \, x \star \psi_{j,k}
(u)\}_{j \leq J,  1 \leq k \leq K}~.
\end{equation}
The parameter $u$ is 
sampled on a grid such that intermediate sample values can be recovered
by linear interpolations. 
The wavelets $\psi_k$ are chosen so that ${\cal W}$ is a contractive and invertible operator,
and in order to obtain a sparse representation.
This means that $x \star \psi_{j,k}(u)$ is mostly zero besides few high
amplitude coefficients corresponding to variations of $x(u)$ 
which ``match'' $\psi_k$ at the scale $2^j$.
This sparsity plays an important role in non-linear contractions.

For audio signals, $n=1$, sparse representations are usually
obtained with at least $K = 12$ intermediate frequencies
within each octave $2^j$, which are similar to half-tone musical notes. This is done
by choosing a wavelet $\psi(u)$ having a frequency bandwidth
of less than $1/12$ octave and 
$\psi_{k} (u) = 2^{k/K} \psi (2^{-k/K} u)$ for $1 \leq k \leq K$.
For images, $n = 2$, we must discriminate image variations along
different spatial orientation. It is obtained by separating angles
$\pi k / K$, with an oriented wavelet which is rotated
$\psi_{k} (u) = \psi(r_k^{-1} u)$. Intermediate rotated wavelets are approximated
by linear interpolations of these $K$ wavelets.
Figure \ref{figure2} shows the wavelet transform of an image,
with $J = 4$ scales and $K = 4$ angles, where $x \star \psi_{j,k} (u)$ is
subsampled at intervals $2^j$. It has few 
large amplitude coefficients shown in white.

\paragraph{Filter bank} 
Wavelet transforms can be computed with a fast multiscale cascade of
filters, which is at the core of deep network architectures.
At each scale $2^j$, we define a low-pass filter $w_{j,0}$ which increases
the averaging scale from $2^{j-1}$ to $2^j$,
and band-pass filters $w_{j,k}$ which compute each wavelet:
\begin{equation}
\label{nsdf8sdf}
\phi_{j} = w_{j,0} \star  \phi_{j-1} ~~\mbox{and}~~
\psi_{j,k} = w_{j,k} \star \phi_{j-1}~.
\end{equation}
Let us write $x_{j} (u,0) = x \star \phi_j(u)$ and
 $x_{j} (u,k) = x \star \psi_{j,k}(u)$ for $k \neq 0$.
It results from (\ref{nsdf8sdf})  that for $0 < j \leq J$
and all $1 \leq k \leq K$:
\begin{equation}
\label{fionsdfsdsf}
x_{j} (u,k) = x_{j-1}(\cdot,0) \star w_{j,k} (u)~.
\end{equation}
These convolutions  may be subsampled by $2$ along $u$, in which case
$x_j (u,k)$ is sampled at intervals $2^j$ along $u$.

\paragraph{Phase removal} Wavelet coefficients $x_j(u,k) = x \star \psi_{j,k}(u)$ 
oscillate at a scale $2^j$. 
Translations of $x$ smaller than $2^j$ modifies the complex phase of
$x_j (u,k)$ if the wavelet is complex or its sign if it is real. 
Because of these oscillations, averaging $x_j$ with $\phi_J$ outputs a zero signal.
It is necessary to apply a non-linearity which removes oscillations.
A modulus $\rho(\alpha) = |\alpha|$ computes such a positive envelop.
Averaging
$\rho(x \star \psi_{j,k} (u))$ by $\phi_J$ outputs non-zero coefficients
which are locally invariant at a scale $2^J$:
\begin{equation}
\label{sdf}
\Phi_J x(u,j,k) = \rho(x \star \psi_{j,k}) \star \phi_J (u)~.
\end{equation}
Replacing the modulus by
a rectifier $\rho (\alpha) = \max(0 , \alpha)$ gives nearly the same result, up 
to a factor $2$. 
One can prove \cite{mallat-math} that this representation is 
Lipschitz continuous to actions of diffeomorphisms 
over $x \in \Ld(\R^n)$, and thus satisfies
(\ref{addLipsch3}) for the metric (\ref{diffmetrs}).
Indeed, the
wavelet coefficients of $x$  deformed by $g$ can be written as the wavelet
coefficients of $x$ with deformed wavelets. Small deformations 
produce small modifications of wavelets in $\Ld(\R^n)$, because they are localized
and regular. The resulting modifications of wavelet coefficients is of the
order of the diffeomorphism metric $|g|_{\Diff}$.

\paragraph{Contractions} A modulus and a rectifier are contractive non-linear pointwise operators:
\begin{equation}
\label{contractions}
|\rho(\alpha) - \rho(\alpha')| \leq |\alpha - \alpha'|. 
\end{equation}
However, if $\alpha = 0$ or $\alpha' = 0$ then this inequality is an equality.
Replacing $\alpha$ and $\alpha'$ by 
$x \star \psi_{j,k} (u)$ and $x' \star \psi_{j,k} (u)$ shows that distances
are much less reduced if 
$x \star \psi_{j,k} (u)$ is sparse. Such contractions 
do not reduce as
much the distance between sparse signals and other signals. 
This is illustrated by reconstruction examples in Section \ref{scatnsfoisdnf}.

\paragraph{Scale separation limitations} 
The local multiscale invariants in (\ref{sdf}) have dominated
pattern classification applications for music, speech and images, until
$2010$. It is called
{\it Mel-spectrum} for audio  \cite{shamma} and SIFT type 
feature vectors \cite{SIFT} in images.
Their limitations comes from the loss of information produced by 
the averaging by $\phi_J$ in (\ref{sdf}). To reduce this loss,
they are computed at short time scales $2^J \leq 50 ms$ in audio signals, 
or over small image patches $2^{2J} = 16^2$ pixels.
As a consequence, they do not capture large scale structures, which are
important for classification and regression problems. To build a rich set
of local invariants at a large scale $2^J$,
it is not sufficient to separate scales with wavelets, we must also 
capture scale interactions.

A similar issue appears
in physics to characterize the interactions of complex systems.
Multiscale separations are used to reduce the parametrization
of classical many body systems, for example with multipole methods 
\cite{Greengard}. However, it does not apply to
complex interactions, as in quantum systems. 
Interactions across scales, between small and larger structures, must
be taken into account. Capturing these interactions with low-dimensional models
is a major challenge. We shall see that deep neural networks and
scattering transforms provide high order coefficients which partly characterize
multiscale interactions.

\section{Deep Convolutional Neural Network Architectures}
\label{DeepNet}

\begin{figure*}
\center
\includegraphics[natwidth=2280,natheight=1450,width=12cm]{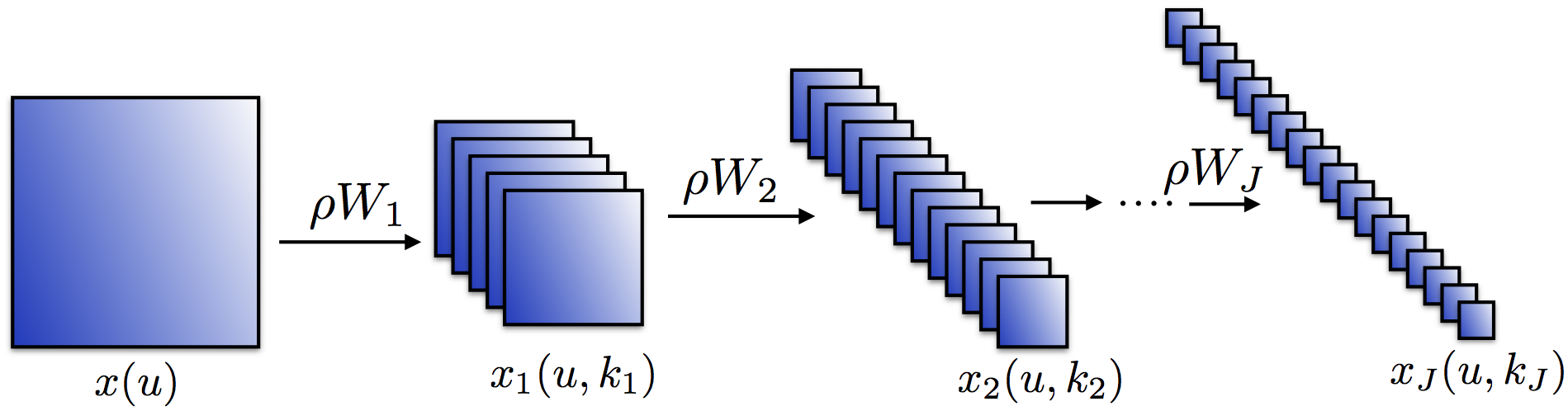}
\caption{A convolution network iteratively computes each layer $x_{j}$ by 
transforming the previous layer $x_{j-1}$, with a linear operator $W_j$ and a pointwise non-linearity $\rho$. }
\label{figure1}
\end{figure*}

Deep convolutional networks are computational architectures introduced
by Le Cun \cite{LeCun}, providing remarkable regression and 
classification results in high dimension \cite{nature,Krizhevsky,speech}.
We describe these architectures illustrated by Figure \ref{figure1}.
They iterate over linear operators $W_j$ including convolutions,
and predefined pointwise non-linearities.

A convolutional network takes in input a signal $x(u)$, which is here an image. 
An internal network layer $x_j (u,k_j)$ at a depth $j$
is indexed by the same translation variable $u$,
usually subsampled, and a channel index $k_j$. 
A layer $x_{j}$ is computed from $x_{j-1}$ by applying a linear operator $W_j$
followed by a pointwise non-linearity $\rho$:
\[
x_{j}  = \rho W_j x_{j-1} ~.
\]
The non-linearity $\rho$ 
transforms each coefficient $\alpha$ of the array $W_j x_{j-1}$, and satisfies
the contraction condition (\ref{contractions}).
A usual choice is the rectifier
$\rho(\alpha) = \max(\alpha, 0)$ for $\alpha \in \R$, but it can
also be a sigmoid, or a 
modulus $\rho(\alpha) = |\alpha|$ where $\alpha$ may be complex. 

Since most classification and regression functions $f(x)$
are invariant or covariant to translations, 
the architecture imposes that $W_j$ is covariant to translations. The
output is translated if the input is translated. Since $W_j$ is linear,
it can thus be written as a sum of convolutions:
\begin{equation}
\label{filtering}
[W_j x_{j-1}] (u,k_j) =  
\sum_{k} \sum_{v} x_{j-1}(v,k)\, w_{j, k_j} (u-v,k) =
\sum_{k} [x_{j-1}(\cdot,k) \star w_{j, k_j} (\cdot,k)] (u) ~.
\end{equation}
The variable $u$ is usually subsampled. For a fixed $j$, all
filters $w_{j,k_{j}}(u,k)$ have the same support width along $u$,
typically smaller than $10$. 

The operators $\rho\, W_j$ propagates the input signal $x_0 = x$ 
until the last layer $x_J$. 
This cascade of spatial convolutions defines translation covariant
operators of progressively
wider supports as the depth $j$ increases. Each $x_j (u,k_j)$
is a non-linear function of $x(v)$, for $v$ in a square centered at $u$,
whose width $\Delta_j$ does not depend upon $k_j$. The width
$\Delta_j$ is the spatial scale of a layer $j$. 
It is equal to $2^j\, \Delta$ if all filters $w_{j,k_{j}}$ have a
width $\Delta$ and the convolutions (\ref{filtering}) are 
subsampled by $2$.

Neural networks include many side tricks. 
They sometimes normalize the amplitude of
$x_j (v,k)$, by dividing it by the norm of all coefficients 
$x_j (v,k)$ for $v$ in a neighborhood of $u$. This eliminates
multiplicative amplitude variabilities. Instead of subsampling 
(\ref{filtering}) on a regular grid, a max pooling 
may select the largest coefficients over each sampling cell.
Coefficients may also be modified by subtracting a constant adapted
to each coefficient. When applying a rectifier $\rho$, this constant
acts as a soft threshold, which increases sparsity.
It is usually observed that inside network coefficients
$x_j (u,k_j)$ have a sparse activation. 

The deep network output 
$x_J = \Phi_J (x)$ is provided to a classifier, 
usually composed of fully connected neural network layers
\cite{nature}. Supervised deep learning algorithms
optimize the filter values $w_{j,k_{j}}(u,k)$ in order
to minimize the average classification or regression error on the training
samples $\{x^i\, , \, f(x^i) \}_{i \leq q}$. 
There can be more than $10^8$ variables in a network \cite{nature}. 
The filter update is done with a back-propagation algorithm,
which may be computed with a stochastic gradient descent, with
regularization procedures such as dropout.
This high-dimensional optimization is non-convex, but
despite the presence of many local minima, the regularized
stochastic gradient descent converges to a local minimum providing good
accuracy on test examples \cite{benarous}. The rectifier non-linearity
$\rho$ is usually preferred
because the resulting optimization has a better convergence.
It however requires a large number of training examples. 
Several hundreds of examples per class are usually
needed to reach a good performance.

Instabilities have been observed in some network architectures
\cite{adversarial}, where additions of small perturbations on
$x$ can produce large variations of $x_J$. It happens when the norms of
the matrices $W_j$ are larger than $1$, and hence amplified
when cascaded. However, deep network also have a strong form of stability
illustrated by transfer learning \cite{nature}. A deep network
layer $x_J$ optimized on particular training databasis, can 
approximate different classification functions,
if the final classification layers are trained on a new databasis. 
This means that it has learned stable structures, which can be
transferred across similar learning problems.

\section{Scattering on the Translation Group}
\label{scatnsfoisdnf}

A deep network alternates linear operators $W_j$ and 
contractive non-linearities $\rho$. To analyze the properties
of this cascade, we begin with a simpler architecture, where $W_j$ does not
combine multiple convolutions across channels in each layer.
We show that such network coefficients are obtained through convolutions
with a reduced number of equivalent wavelet filters. It defines
a scattering transform \cite{mallat-math} whose contraction and
linearization properties are reviewed.
Variance reduction and loss of information are studied with reconstructions of
stationary processes.

\paragraph{No channel combination} 
Suppose that $x_{j}(u,k_{j})$ is computed by convolving 
a single channel $x_{j-1} (u,k_{j-1})$  along $u$:
\begin{equation}
\label{nsdfs}
x_{j} (u, k_{j}) = \rho\Big(x_{j-1}(\cdot,k_{j-1}) \star w_{j,\hh} (u)\Big)~~\mbox{with}~~k_{j} = (k_{j-1}, \hh)~.
\end{equation}
It corresponds to a deep network filtering (\ref{filtering}), where
filters do not combine several channels.
Iterating on $j$ defines a convolution tree, as opposed to a full network. 
It results from  (\ref{nsdfs}) that
\begin{equation}
\label{cascasdnfsd}
x_J (u,k_J) = \rho(\rho(\rho(\rho(x \star w_{1,\hh_1})
\star ... )\star w_{J-1,\hh_{J-1}}) \star w_{J,\hh_J}) ~.
\end{equation}
If $\rho$ is a
rectifier $\rho(\alpha) = \max(\alpha,0)$
or a modulus $\rho(\alpha) = |\alpha|$ then 
$\rho(\alpha) = \alpha$ if $\alpha \geq 0$. 
We can thus remove this non-linearity at the output of an averaging
filter $w_{j,\hh}$. Indeed
this averaging filter is applied
to positive coefficients and thus computes positive coefficients,
which are not affected by $\rho$.  
On the contrary, if $w_{j,\hh}$ is a band-pass
filter then the convolution with $x_{j-1}(\cdot,k_{j-1})$
has alternating signs or a complex
phase which varies. The non-linearity 
$\rho$ removes the sign or the phase, which has a strong contraction
effect. 

\paragraph{Equivalent wavelet filter} Let $m$ be the number of band-pass filters 
$\{ w_{j_n,\hh_{j_n}} \}_{1 \leq n \leq m}$ in the convolution
cascade (\ref{cascasdnfsd}). All other filters are thus
low-pass filters. If we remove 
$\rho$ after each low-pass filter, 
we get $m$ equivalent band-pass filters:
\begin{equation}
\label{eqndf08sdfsd}
\psi_{j_n,k_n} (u) = w_{j_{n-1}+1,\hh_{j_{n-1}+1}} \star ... 
 \star w_{j_n , \hh_{j_n}} (u)~.
\end{equation}
The cascade of $J$ convolutions (\ref{cascasdnfsd}) is reduced
to $m$ convolutions with these equivalent filters
\begin{equation}
\label{deandfs}
x_J (u,k_J) = 
\rho(\rho(...\rho(\rho(x \star \psi_{j_1,k_1}) \star \psi_{j_2,k_2})... \star 
\psi_{j_{m-1},k_{m-1}}) \star 
\psi_{J,k_J} (u))~,
\end{equation}
with $0 < j_1 < j_2 <...<j_{m-1} < J$.
If the final filter $w_{J,\hh_J}$ at the depth $J$ is a low-pass filter then
$\psi_{J,k_J} = \phi_J$ is an equivalent low-pass filter. In this case,
the last non-linearity $\rho$ can also be removed, which gives
\begin{equation}
\label{deandfs2}
x_J(u,k_J) = \rho(\rho(...\rho(\rho(x \star \psi_{j_1,k_1}) \star \psi_{j_2,k_2})... \star 
\psi_{j_{m-1},k_{m-1}}) 
\star \phi_J (u)~.
\end{equation}
The operator $\Phi_J x = x_J$ is a wavelet scattering transform,
introduced in \cite{mallat-math}.
Changing the network filters $w_{j,\hh}$ modifies the equivalent band-pass
filters $\psi_{j,k}$.
As in the fast wavelet transform algorithm (\ref{fionsdfsdsf}),
if $w_{j,\hh}$ is a rotation of a dilated filter $w_j$ then
$\psi_{j,\hh}$ is a dilation and rotation of
a single mother wavelet $\psi$.

\paragraph{Scattering order} The order $m = 1$ coefficients 
$x_J(u,k_J) = \rho( x \star \psi_{j_1,k_1}) \star \phi_J (u)$ are 
the wavelet coefficient computed in (\ref{sdf}). 
The loss of information due to averaging
is now compensated by higher order coefficient.
For $m = 2$, 
$\rho(\rho(x \star \psi_{j_1,k_1}) \star  \psi_{j_2,k_2}) \star \phi_J$ 
are complementary invariants. They measure interactions
between variations of $x$ at a scale $2^{j_1}$, within 
a distance $2^{j_2}$, and along orientation or frequency bands
defined by $k_1$ and $k_2$. 
These are scale interaction coefficients, missing from first order coefficients.
Because $\rho$ is strongly contracting,
order $m$ coefficients have an amplitude
which decrease quickly as $m$ increases \cite{mallat-math,waldspurger}.
For images
and audio signals, the energy of scattering coefficients becomes negligible
for $m \geq 3$. Let us emphasize that
the convolution network depth is $J$, whereas $m$ is the number of effective
non-linearity of an output coefficient.

\paragraph{Diffeomorphism continuity} Section \ref{wavelet-sec} explains that a wavelet transform defines 
representations which are Lipschitz continuous to actions of diffeomorphisms.
Scattering coefficients up to the order $m$ are computed by applying
$m$ wavelet transforms. One can prove \cite{mallat-math} that it thus
defines a representation which is Lipschitz continuous to the 
the action of diffeomorphisms. There exists $C > 0$ such that
\[
\forall (g,x) \in \Diff(\R^n) \times \Ld(\R^n)~~,~~
\|\Phi_J (g.x) - \Phi_J x \| \leq C\, m\,\Big(2^{-J} \|g\|_\infty + \|\nabla g \|_\infty\Big)\, \|x\|~,
\]
plus a Hessian term which is neglected. This result is proved
in \cite{mallat-math} for $\rho(\alpha) = |\alpha|$, but it remains valid for
any contractive pointwise operator such as rectifiers
$\rho(\alpha) = \max(\alpha,0)$. 
It relies
on commutation properties of wavelet transforms and diffeomorphisms. 
It shows that the action of small diffeomorphisms is linearized over
scattering coefficients. 

\paragraph{Classification} Scattering vectors are restricted to coefficients of order $m \leq 2$,
because their amplitude is negligible beyond.
A translation scattering $\Phi_J x$ is well adapted to
classification problems where the main source of intra-class variability
are due to translations, to small deformations, or to ergodic
stationary processes. 
For example, intra-class variabilities of 
hand-written digit images are essentially due to translations and deformations. 
On the MNIST digit data basis \cite{Bruna},
applying a linear classifier to scattering coefficients $\Phi_J x$ 
gives state 
of the art classification errors.
Music or speech classification over short time intervals
of $100ms$ can be modeled by ergodic stationary processes.
Good music and speech classification results are then
obtained with a scattering transform \cite{Anden}.
Image texture classification are also problems where intra class variability
can be modeled by ergodic stationary processes.
Scattering transforms give state of the art
results over a wide range of image texture databases
\cite{Bruna,sifre}, compared to other descriptors including
power spectrum moments. 
Softwares can be retrieved at
{\it www.di.ens.fr/data/software}.

\paragraph{Stationary processes} To analyze the information loss,
we now study the reconstruction of $x$ from its scattering coefficients, in a
stochastic framework where $x$ is a stationary process. 
This will raise variance and
separation issues, where sparsity plays a role. 
It also demonstrates the importance
of second order scale interaction terms, to capture non-Gaussian geometric
properties of ergodic stationary processes. 
Let us consider scattering coefficients of order $m$
\begin{equation}
\label{estnasfs}
\Phi_J x(u,k) = \rho(...\rho(\rho(x \star \psi_{j_1,k_1})\star \psi_{j_2,k_2})
... \star 
\psi_{j_{m},k_{m}}) 
\star \phi_J (u)~,
\end{equation}
with $\int \phi_J (u) du = 1$. 
If $x$ is a stationary process then 
$\rho(...\rho(x \star \psi_{j_1,k_1})... \star \psi_{j_{m},k_{m}})$
remains stationary because convolutions and pointwise operators
preserve stationarity. The spatial averaging by $\phi_J$ provides a non-biased
estimator of the expected value of $\Phi_J x(u,k)$, 
which is a scattering moment:
\begin{equation}
\label{nnsdf}
\E (\Phi_J x(u,k)) = \E \Big(
\rho(...\rho(\rho(x \star \psi_{j_1,k_1})\star \psi_{j_2,k_2})
... \star 
\psi_{j_{m},k_{m}})  \Big) ~.
\end{equation}
If $x$ is a slow mixing process, which is a weak ergodicity assumption, then 
the estimation variance $\sigma_J^2 = \|\Phi_J x - \E(\Phi_J x)\|^2$
converges to zero \cite{brunastat} when $J$ goes to $\infty$. 
Indeed, $\Phi_J$ is computed by iterating on
contractive operators, which average an ergodic stationary process $x$ 
over progressively larger scales. 
One can prove that scattering moments characterize complex
multiscale properties of fractals and multifractal processes, such 
as Brownian motions, Levi processes or Mandelbrot cascades
\cite{brunaAnals}.

\begin{figure*}
\begin{tabular}{lllll}
\includegraphics[width=2.5cm]{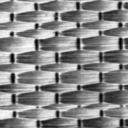}&
\includegraphics[width=2.5cm]{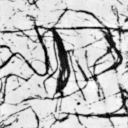}&
\includegraphics[width=2.5cm]{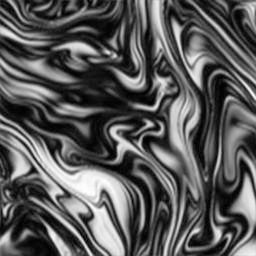}&
\includegraphics[width=2.5cm]{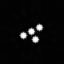}&
\includegraphics[width=2.5cm]{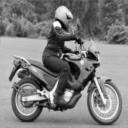}\\
\includegraphics[width=2.5cm]{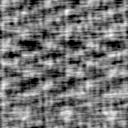}&
\includegraphics[width=2.5cm]{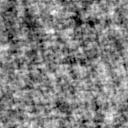}&
\includegraphics[width=2.5cm]{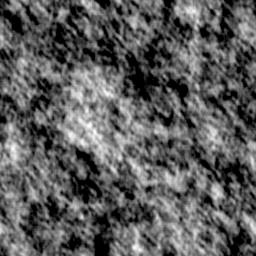}&
 &\\
\includegraphics[width=2.5cm]{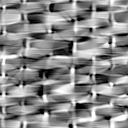}&
\includegraphics[width=2.5cm]{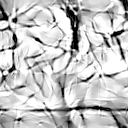}&
\includegraphics[width=2.5cm]{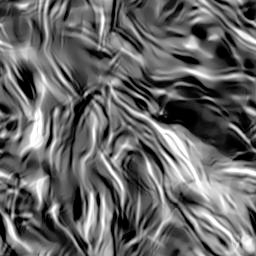}&
\includegraphics[width=2.5cm]{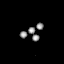}&
\includegraphics[width=2.5cm]{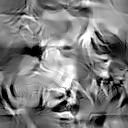}
\end{tabular}
\caption{First row: original images. Second row: realization of
a Gaussian process with same second covariance moments.
Third row: reconstructions from 
first and second order scattering coefficients.}
\label{textures}
\end{figure*}

\paragraph{Inverse scattering and sparsity} 
Scattering transforms are generally not invertible but 
given $\Phi_J (x)$ one can can compute vectors $\tilde x$ such that
$\|\Phi_J (x) - \Phi_J (\tilde x)\| \leq \sigma_J$. 
We initialize $\tilde x_0$ as a Gaussian white noise realization,
and iteratively update $\tilde x_n$ by reducing
$\|\Phi_J (x) - \Phi_J (\tilde x_n)\|$ with a gradient descent algorithm, 
until it reaches $\sigma_J$ \cite{brunastat}. Since $\Phi_J (x)$ is not
convex, there is no guaranteed convergence, but numerical reconstructions
converge up to a sufficient precision. The recovered $\tilde x$
is a stationary process having nearly the same scattering moments as $x$,
whose properties are similar to a maximum entropy process for fixed
scattering moments \cite{brunastat}.

Figure \ref{textures} shows several examples of images $x$ with
$N^2$ pixels. 
The first three images are realizations of ergodic stationary textures.
The second row gives realizations of stationary Gaussian 
processes having the same $N^2$
second order covariance moments as the top textures.
The third column shows the vorticity field of a two-dimensional turbulent fluid. 
The Gaussian realization is thus a Kolmogorov type model, which does not 
restore the filament geometry.
The third row gives reconstructions from scattering coefficients, limited to
order $m \leq 2$.
The scattering vector is computed at the maximum scale $2^J = N$, with
wavelets having $K = 8$ different orientations. It 
is thus completely invariant to translations.
The dimension of
$\Phi_{J} x$ is about $(K \log_2 N)^2/2 \ll N^2$.
Scattering moments 
restore better texture geometries than the Gaussian models 
obtained with $N^2$ covariance moments. This geometry is mostly captured
by second order scattering coefficients,
providing scale interaction terms. Indeed, 
first order scattering moments can only 
reconstruct images which are
similar to realizations of Gaussian processes.
First and second order scattering moments also provide good models
of ergodic audio textures \cite{brunastat}.

The fourth image has very sparse wavelet coefficients. In this case the image
is nearly perfectly restored by its scattering coefficients, up to a random translation. The reconstruction is centered for comparison. 
Section \ref{wavelet-sec} explains that if wavelet coefficients are
sparse then a rectifier or an absolute value contractions $\rho$ 
does not contract as much distances with other signals. Indeed, 
$|\rho(\alpha) - \rho(\alpha')| = |\alpha - \alpha'|$ if $\alpha=0$ or
$\alpha'=0$. Inverting a scattering transform is a non-linear inverse
problem, which requires to recover a lost phase information. 
Sparsity has an important role on such phase recovery problems
\cite{waldspurger}. Translating randomly the last motorcycle image defines
a non-ergodic stationary process, whose wavelet coefficients are not as sparse. 
As a result, the reconstruction from a
random initialization is very different, and does not 
preserve patterns which are important for most classification tasks.
This is not surprising since there is much less scattering coefficients than
image pixels. If we reduce $2^J$ so that the number of 
scattering coefficients reaches the number of pixels then the reconstruction is of good quality, but there is little variance reduction.

Concentrating on the translation group is not so effective to reduce variance
when the process is not translation ergodic. Applying wavelet filters 
can destroy important structures which are not sparse over wavelets. 
Next section addresses both issues. 
Impressive texture synthesis results have been obtained with 
deep convolutional networks trained on image data bases
\cite{bethdge}, but with much more output coefficients. 
Numerical reconstructions \cite{fergus} also show that one
can also recover complex patterns, such as
birds, airplanes, cars, dogs, ships,
if the network is trained to recognize the corresponding image classes.
The network keeps some form of memory of important
classification patterns.

\section{Multiscale Hierarchical Convolutional Networks}
\label{grnsdfssec}

Scattering transforms on the translation group are restricted
deep convolutional network architectures, which suffer from
variance issues and loss of information. We shall explain why 
channel combinations provide the flexibility needed to avoid some of these 
limitations. We analyze a general class of
convolutional network architectures by extending the tools previously introduced. 
Contractions and invariants to translations are
replaced by contractions along groups of local symmetries adapted
to $f$, which are defined by parallel transports in each network layer. 
The network is structured by
factorizing groups of symmetries, as depth increases. It
implies that all linear operators can be written as generalized convolutions
across multiple channels. To preserve the classification margin, wavelets must
also be replaced by adapted filter weights, which separate discriminative patterns in multiple
network fibers.

\paragraph{Separation margin}
Network layers $x_{j} = \rho W_{j} x_{j-1}$ are computed with
operators $\rho W_j$ which contract and separate components of $x_j$.
We shall see that $W_j$ also needs to prepare
$x_j$ for the next transformation $W_{j+1}$, 
so consecutive operators $W_j$ and $W_{j+1}$ are strongly dependant.
Each $W_{j}$ is a contractive linear operator, 
$\|W_{j} z\| \leq \|z\|$ to reduce the space volume, and 
avoid instabilities when cascading such operators \cite{adversarial}.
A layer $x_{j-1}$ must separate $f$ so that we can write
$f(x) = f_{j-1} (x_{j-1})$ for some function $f_{j-1}(z)$.
To simplify explanations, we 
concentrate on classification, where 
separation is an $\epsilon > 0$ 
margin condition:
\begin{equation}
\label{sdnfsdfsdfe0}
\forall (x,x') \in \Omega^2~~,~~
\| x_{j-1} -  x'_{j-1}  \| \geq \epsilon ~~
\mbox{if $f(x) \neq f(x')$}~.
\end{equation}
The next layer $x_{j} = \rho W_{j} x_{j-1}$ lives in a contracted space
but it must also satisfy
\begin{equation}
\label{sdnfsdfsdfe0987sd}
\forall (x,x') \in \Omega^2~~,~~
\|\rho W_{j} x_{j-1} - \rho W_{j} x'_{j-1}  \| \geq \epsilon \, ~~
\mbox{if $f (x) \neq f (x')$}~.
\end{equation}
The operator $W_{j}$ computes a linear projection
which preserves this margin condition,
but the resulting dimension reduction is limited.
We can further contract the space non-linearly with $\rho$. To preserve the
margin, it must reduce distances along non-linear displacements which transform 
any $x_{j-1}$ into an $x'_{j-1}$ which is in the same class.

\paragraph{Parallel transport}
Displacements which preserve classes are defined by local symmetries (\ref{localsym}),
which are transformations $\bar g$ such that 
$f_{j-1}(x_{j-1}) = f_{j-1}(\bar g.x_{j-1})$.
To define a local invariant to 
a group of transformations $G$,
we must process the orbit
$\{ \bar g. x_{j-1} \}_{g \in G}$. 
However, $W_j$ is applied to $x_{j-1}$ not on the non-linear transformations
$\bar g.x_{j-1}$ of $x_{j-1}$. The key idea is that
a deep network can proceed in two steps. 
Let us write $x_{j} (u,k_j) = x_j(v)$ with $v \in P_{j}$. 
First, $\rho W_j$ computes an approximate mapping of such an
orbit $\{ \bar g. x_{j-1} \}_{\bar g \in G}$ into
a parallel transport in $P_j$, which moves coefficients of $x_j$.  
Then $W_{j+1}$ applied to $x_j$ is filtering the orbits of this parallel transport.
A parallel transport is defined by operators
$g \in G_j$ acting on $v \in P_j$, and we write
\[
\forall (g,v) \in G_j \times P_j~~,~~g.x_j (v) = x_j (g.v)~.
\]
The operator $W_j$ is defined 
so that $G_j$ is a group of local symmetries: $f_j (g.x_j) = f_j (x_j)$ 
for small $|g|_{G_j}$. This is obtained if a transport of $x_j = W_j x_{j-1}$
by $g \in G_j$ corresponds to the action of a
local symmetry $\bar g$ of $f_{j-1}$ on $x_{j-1}$:
\begin{equation}
\label{Onsdf09sdf}
g. [\rho W_j x_{j-1}]  = \rho W_j  [\bar g.x_{j-1}]~.
\end{equation}
By definition $f_j (x_j) = f_{j-1} (x_{j-1}) = f(x)$. 
Since $f_{j-1} (\bar g . x_{j-1}) = f_{j-1} (x_{j-1})$ it results 
from (\ref{Onsdf09sdf}) that
$f_j (g.x_j) = f_j (x_j)$.

\begin{figure*}
\center
\includegraphics[width=6cm]{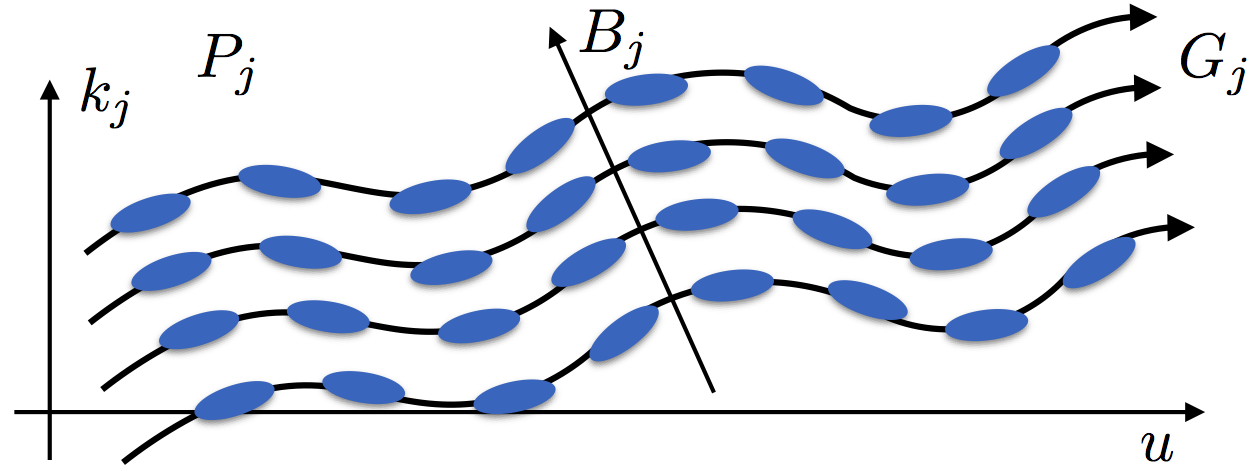}
\caption{A multiscale hierarchical networks computes convolutions along the fibers of
a parallel transport. It is defined by a group $G_j$ of symmetries acting on
the index set $P_j$ of a layer $x_j$. 
Filter weights are transported along fibers.}
\label{figure4}
\end{figure*}

The index space $P_{j}$ is called
a $G_{j}$-principal fiber bundle in differential geometry \cite{Petitot},
illustrated by Figure \ref{figure4}. 
The orbits of $G_j$ in $P_j$ are fibers,
indexed by the equivalence classes $B_{j} = P_{j} / G_{j}$.
They are
globally invariant to the action of $G_j$, and play an important role
to separate $f$. Each fiber is indexing a continuous Lie group, 
but it is sampled along $G_j$ at 
intervals such that values of $x_j$ can be interpolated in between.
As in the translation case, 
these sampling intervals depend upon the local invariance of $x_j$, which 
increases with $j$.

\paragraph{Hierarchical symmetries} In a hierarchical convolution network, we further impose that
local symmetry groups are growing with depth, and can be factorized:
\begin{equation}
\label{enbmdosdfbsd}
\forall j \geq 0~~,~~G_{j} = G_{j-1} \rtimes H_{j}.
\end{equation}
The hierarchy begins for $j = 0$ by the translation group $G_0 = \R^n$,
which acts on $x(u)$ through the spatial variable $u \in \R^n$. 
The condition (\ref{enbmdosdfbsd}) 
is not necessarily satisfied by general deep networks,
besides $j = 0$ for translations. 
It is used
by joint scattering transforms \cite{sifre,andenLostanlen} 
and has been proposed for unsupervised convolution network learning \cite{szlam}.
Proposition \ref{props1} proves that this hierarchical embedding
implies that each $W_{j}$ is a convolution on $G_{j-1}$.

\begin{proposition}
\label{props1}
The group embedding (\ref{enbmdosdfbsd}) implies that $x_j$ can be indexed by
$(g,\hh,b) \in G_{j-1} \times H_j \times B_j$ and there exists
$w_{j,\hh.b} \in \C^{P_{j-1}}$ such that
\begin{equation}
\label{Consdfoushdf}
x_{j} (g,\hh,b) = \rho \Big( \sum_{v' \in P_{j-1}} x_{j-1} (v') \, w_{j,\hh.b} (g^{-1}. v')
\Big) = 
\rho \Big( x_{j-1} \star^{j-1}  w_{j,\hh.b} (g) \Big) ~, 
\end{equation}
where $\hh.b$ transports $b \in B_j$ by $\hh \in H_j$ in $P_j$.
\end{proposition}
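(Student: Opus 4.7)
My approach is to combine the principal bundle structure on $P_j$ with the semidirect factorization of $G_j$ to get explicit coordinates, and then to force the linear operator $W_j$ into convolution form using the intertwining identity (\ref{Onsdf09sdf}). Concretely, since $P_j$ is a $G_j$-principal fiber bundle over $B_j = P_j/G_j$, I would fix a section $\sigma : B_j \to P_j$ so that every $v \in P_j$ has a unique expression $v = g'.\sigma(b)$ with $(g',b)\in G_j\times B_j$. The decomposition $G_j = G_{j-1} \rtimes H_j$ then writes $g' = g\cdot \hh$ uniquely, and defining $\hh.b := \hh.\sigma(b)$ gives the bijection $P_j \leftrightarrow G_{j-1}\times H_j\times B_j$ claimed in the statement. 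Under this identification, the key observation is that the left action of $G_{j-1}\subset G_j$ on $P_j$ is simply left translation in the first coordinate, $g_0.(g,\hh,b) = (g_0 g,\hh,b)$, because $G_{j-1}$ is the normal factor of the semidirect product.

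Next, since $W_j$ is linear, I can represent it by a kernel: $[W_j x_{j-1}](v) = \sum_{v'\in P_{j-1}} x_{j-1}(v')\,K_j(v,v')$. I would then apply (\ref{Onsdf09sdf}) with $g \in G_{j-1}$; because $\rho$ acts pointwise it commutes with the transport, so the identity reduces to $g.(W_j x_{j-1}) = W_j(g.x_{j-1})$, where on the right $g$ denotes the action of $G_{j-1}$ already present at the previous layer (this is the inductive content of the hierarchical embedding (\ref{enbmdosdfbsd})). Writing both sides in the coordinates above, using $(g.\phi)(v) = \phi(g^{-1}.v)$ and the change of variables $u' = g^{-1}.v'$ inside the kernel sum, a short calculation forces $K_j((g_1,\hh,b),v') = K_j((e,\hh,b),g_1^{-1}.v')$ for all $g_1\in G_{j-1}$, $v'\in P_{j-1}$, and $(\hh,b)\in H_j\times B_j$.

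Setting $w_{j,\hh.b}(\cdot) := K_j((e,\hh,b),\cdot)\in \C^{P_{j-1}}$ then turns the kernel formula into the convolution $[W_j x_{j-1}](g,\hh,b) = (x_{j-1} \star^{j-1} w_{j,\hh.b})(g)$, and applying the pointwise $\rho$ gives exactly (\ref{Consdfoushdf}). The bulk of the argument is essentially formal; the only place that genuinely uses structure is the coordinate identification, which is where I expect the main difficulty to lie. Specifically, one has to check that the freeness of the $G_j$-action (built into the principal bundle assumption) and the normality of $G_{j-1}$ inside $G_{j-1}\rtimes H_j$ conspire to make the $G_{j-1}$-action on $P_j$ take the clean product-of-translations form needed for the kernel reduction. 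Without the semidirect (as opposed to merely direct) structure of (\ref{enbmdosdfbsd}), the $H_j$-coordinate would not in general be preserved by the $G_{j-1}$-action and the convolution on $G_{j-1}$ alone would not emerge.
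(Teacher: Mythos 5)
Your argument is correct and follows essentially the same route as the paper's proof: both represent $W_j$ by a kernel (the row vectors $w_{j,v}$), index $P_j$ through the orbit decomposition $v=\bar g.b$ with $\bar g=(g,\hh)\in G_{j-1}\rtimes H_j$, and use the covariance relation (\ref{Onsdf09sdf}) restricted to $g\in G_{j-1}$ together with a change of variables to force $w_{j,g.b}(v')=w_{j,b}(g^{-1}.v')$, i.e.\ a convolution along $G_{j-1}$. Your write-up merely makes explicit the choice of a section $B_j\to P_j$ and the role of the normality of $G_{j-1}$ in the semidirect product, which the paper's terser proof leaves implicit.
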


\begin{proof}
We write
$x_{j} = \rho W_{j} x_{j-1}$ as inner products with row vectors:
\begin{equation}
\label{definHj}
\forall v \in P_j~~,~~x_{j} (v) = \rho \Big( \sum_{v' \in P_{j-1}} x_{j-1} (v') \, w_{j,v} (v')
\Big) = \rho \Big( \lb x_{j-1} \,,\, w_{j,v} \rb \Big)~.
\end{equation}
If $\bar g \in G_j$ then 
$\bar g. x_{j} (v) = x_j (\bar g . v) = \rho ( \lb  x_{j-1} \,,\, w_{j,\bar g.v} \rb$.
One can write $w_{j,v} =  w_{j,\bar g. b}$ with $\bar g \in G_j$
and $b \in B_j = P_j / G_j$. 
If $G_{j} = G_{j-1} \rtimes H_j$ then
$\bar g \in G_j$ can be decomposed 
into $\bar g = (g, \hh) \in G_{j-1} \rtimes H_j$, where
$g.x_j = \rho (\lb g.x_{j-1},w_{j,b} \rb)$. 
But $g. x_{j-1}(v') = x_{j-1} (g. v')$ so with a change of variable 
we get
$w_{j,g.b} (v') = w_{j,b}(g^{-1}. v')$. 
Hence $w_{j, \bar g.b} (v') = w_{j,(g,l).b}(v) = \hh.w_{j,\hh.b}(g^{-1}. v')$.
Inserting this filter expression in (\ref{definHj}) proves (\ref{Consdfoushdf}).
\end{proof}

This proposition proves that $W_j$ is a convolution along the fibers
of $G_{j-1}$ in $P_{j-1}$.
Each $w_{j,\hh.b}$ is a transformation of an elementary
filter $w_{j,b}$ by a group of local symmetries $\hh \in H_j$ so that 
$f_j (x_j (g,\hh,b))$ remains
constant when $x_j$ is locally transported along $\hh$. 
We give below several examples
of groups $H_j$ and filters $w_{j,\hh.b}$. 
However, learning algorithms compute
filters directly, with no prior knowledge on the group $H_j$. 
The filters $w_{j,\hh.b}$ can be optimized so that variations of
$x_j (g,\hh,b)$ along $\hh$ captures a large variance of
$x_{j-1}$ within each class. Indeed, this variance is then reduced by
the next $\rho W_{j+1}$.
The generators of $H_j$ can
be interpreted as {\it principal symmetry generators}, by analogy 
with the principal directions of a PCA.

\paragraph{Generalized scattering} The scattering convolution along translations
(\ref{nsdfs}) is replaced in
(\ref{Consdfoushdf}) by a convolution along $G_{j-1}$, which 
combines different layer channels. Results for translations can 
essentially be extended to the general case.
If $w_{j,\hh.b}$ is an averaging filter then it computes positive
coefficients, so the non-linearity $\rho$
can be removed. 
If each filter $w_{j,\hh.b}$ has a support in
a single fiber indexed by $b$, as in Figure \ref{figure4},
then $B_{j-1} \subset B_j$.
It defines a  generalized scattering transform, which is 
a structured multiscale hierarchical convolutional network such that
$G_{j-1} \rtimes H_j = G_j$ and $B_{j-1} \subset B_j$.
If $j=0$ then $G_0 = P_0 = \R^n$ so $B_0$ is reduced to $1$ fiber.

As in the translation case, we need to linearize small deformations in
$\Diff(G_{j-1})$, which include much more local symmetries than
the low-dimensional group $G_{j-1}$.
A small diffeomorphism $g \in \Diff(G_{j-1})$ 
is a non-parallel transport along the fibers of $G_{j-1}$ in $P_{j-1}$, 
which is a perturbation of a parallel
transport. It modifies distances between pairs of points
in $P_{j-1}$ by scaling factors. 
To linearize
such diffeomorphisms, we must use localized filters whose supports have
different scales. Scale parameters are typically
different along the different generators
of $G_{j-1} = \R^n \rtimes H_1 \rtimes ...\rtimes H_{j-1}$. 
Filters can be constructed with wavelets dilated at different scales,
along the generators of each group $H_k$ for $1 \leq k \leq j$.
Linear dimension reduction mostly results from this
filtering.
Variations at fine scales may be eliminated, so that $x_j (g,\hh,b)$
can be coarsely sampled along $g$.

\paragraph{Rigid movements} For small $j$, the local symmetry groups $H_{j}$ may be
associated to linear or non-linear physical phenomena such as rotations,
scaling, colored illuminations or pitch frequency shifts. 
Let $SO(n)$ be the group of rotations.
Rigid movements
$SE(n) = \R^n \rtimes SO(n)$ is a non-commutative group,
which often includes local symmetries.
For images, $n = 2$, this group becomes a transport in
$P_1$ with $H_1 = SO(n)$ which rotates
a wavelet filter $w_{1,\hh} (u) = w_1 (r_{\hh}^{-1} u)$.
Such filters are often observed in the first layer of deep convolutional
networks \cite{fergus}. 
They map the action of 
$\bar g = (v,r_k) \in SE(n)$ on $x$ to
a parallel transport of $(u,\hh) \in P_1$ defined 
for $g \in G_1 = \R^2 \times SO(n)$ by $g.(u,\hh) = (v+r_k u,\hh+k)$.
Small diffeomorphisms in $\Diff(G_j)$
correspond to 
deformations along translations and rotations, which are sources of local
symmetries. 
A roto-translation scattering  \cite{sifre,Oyallon}
linearizes them with  wavelet filters along 
translations and rotations, with $G_j = SE(n)$ for all $j > 1$. 
This roto-translation scattering can efficiently
regress physical functionals which are often invariant to
rigid movements, and Lipschitz continuous to deformations. 
For example, quantum molecular energies $f(x)$ 
are well estimated by sparse regressions over such scattering
representations  \cite{Hirn}.

\paragraph{Audio pitch} Pitch frequency shift is a more complex example of a non-linear symmetry 
for audio signals. Two different musical notes of a same instrument have
a pitch shift. Their harmonic frequencies are multiplied by
a factor $2^\hh$, but it is not a dilation because
the note duration is not changed. 
With narrow band-pass filters  $w_{1,\hh}(u) = w_1 (2^{-\hh} u)$,
a pitch shift is approximatively mapped to
a translation along $\hh \in H_1 = \R$ of $\rho(x \star w_{1,\hh} (u))$,
with no modification along the time $u$. 
The action of $g = (v,k) \in G_1 = \R \times \R = \R^2$ over
$(u,\hh) \in P_1$ 
is thus a two-dimensional translation $g.(u,\hh) = (u+v,\hh+k)$.
A pitch shift also comes with deformations along time and log-frequencies, 
which define a much larger class of symmetries in $\Diff(G_1)$.
Two-dimensional wavelets along $(u,\hh)$ can
linearize these small time and log-frequency deformations.
These define a joint time-frequency scattering applied to 
speech and music classifications \cite{andenLostanlen}.
Such transformations were first proposed as neurophysiological
models of audition \cite{shamma}.

\paragraph{Manifolds of patterns} The group $H_{j}$ is associated to complex transformations when $j$ increases.
It needs to capture large transformations between different
patterns in a same class, for example chairs of different styles. Let us consider
training samples $\{x^i \}_i$ of a same class.
The iterated network contractions
transform them into vectors
$\{x^i_{j-1} \}_i$ which are much closer. 
Their distances define weighted graphs which sample 
underlying continuous manifolds in
the space. Such manifolds clearly appear in \cite{Aubry}, 
for high-level patterns
such as chairs or cars, together
with poses and colors. 
As opposed to manifold learning,
deep network filters result from
a global optimization which can be
computed in high dimension. The principal symmetry generators of $H_j$
is associated to common transformations over all
manifolds of examples $x^i_{j-1}$, which preserve the class
while capturing large intra-class variance.
They are approximatively mapped to a parallel transport in $x_j$ by
the filters $w_{j,\hh.b}$.
The diffeomorphisms in  $\Diff(G_j)$ are non-parallel transports
corresponding to high-dimensional displacements on the manifolds of $x_{j-1}$.
Linearizing $\Diff(G_j)$ is equivalent
to partially flatten simultaneously all these manifolds, which may
explain why manifolds are progressively
more regular as the network depth increases \cite{Aubry}, but it involves
open mathematical questions.

\paragraph{Sparse support vectors} We have up to now been concentrated on the reduction of the data variability through contractions.
We now explain why the classification margin can be preserved
thanks to the existence of multiple fibers $B_j$ in $P_j$,
by adapting filters instead of using standard wavelets.
The fibers indexed by $b \in B_j$ are separation instruments, which
increase dimensionality to avoid reducing the classification margin. 
They prevent from collapsing 
vectors in different classes, which have a distance $\|x_{j-1} -  x_{j-1}'\|$ 
close to the minimum margin $\epsilon$. These vectors are close
to classification frontiers. They are called {\it multiscale support vectors},
by analogy with support vector machines.
To avoid further contracting their distance, 
they can be separated along different fibers indexed by $b$. 
The separation is achieved by filters $w_{j,\hh.b}$, which transform
$x_{j-1}$ and $x'_{j-1}$ into 
$x_{j}(g,\hh,b)$ and
$x_j'(g,\hh,b)$ having sparse supports on different fibers $b$. 
The next contraction $\rho W_{j+1}$ reduces distances along fibers indexed
by $(g,\hh) \in G_j$, but not across
$b \in B_j$, which preserves distances.
The contraction increases with $j$ so the number of support vectors close
to frontiers also increases,
which implies that more fibers are needed to separate them.

When $j$ increases, 
the size of $x_j$ is a balance between the dimension reduction along fibers,
by subsampling $g \in G_j$, and an increasing number of fibers $B_j$ which
encode progressively more support vectors.
Coefficients in these fibers become more specialized and invariants,
as the grandmother neurons observed in deep layers of
convolutional networks \cite{grandmother}. They have a strong response to 
particular patterns and are invariant to a large class of transformations.
In this model, the choice of filters
$w_{j,\hh.b}$ are adapted to produce sparse representations
of multiscale support vectors. They provide a sparse distributed
code, defining invariant pattern memorisation. This
memorisation is numerically observed in 
deep network reconstructions \cite{fergus}, which
can restore complex patterns within each class. 
Let us emphasize that groups and fibers are mathematical ghost behind filters,
which are never computed. The learning optimization is directly performed on
filters, which carry the trade-off between contractions to reduce the data variability
and separation to preserve classification margin.

\section{Conclusion}

This paper provides a mathematical framework to analyze
contraction and separation properties of deep convolutional networks.
In this model, network filters are guiding non-linear contractions,
to reduce the data variability in directions of local symmetries. 
The classification margin can be controlled by sparse separations along network fibers.
Network fibers combine invariances along groups of symmetries
and distributed pattern representations,   
which could be sufficiently stable to explain transfer learning 
of deep networks \cite{nature}. However, this is only a framework.
We need
complexity measures, approximation theorems in spaces of high-dimensional functions,
and guaranteed convergence of filter optimization, 
to fully understand the mathematics of these convolution networks. 

Besides learning, there are striking similarities between these
multiscale mathematical tools and the treatment of 
symmetries in particle and statistical physics \cite{Glinsky}. 
One can expect a rich cross fertilization between high-dimensional
learning and physics, through the development of a common mathematical
language.

\paragraph{ Acknowledgements} I would like to thank Carmine Emanuele Cella, Ivan Dokmaninc, Sira Ferradans,  Edouard Oyallon and Ir\`ene Waldspurger for their helpful comments and suggestions.

\paragraph{Funding} This work was supported by the ERC grant InvariantClass 320959.


\end{document}